\newtheorem{theorem}{Theorem}[section]
\newtheorem{lemma}[theorem]{Lemma}
\newtheorem{proposition}[theorem]{Proposition}
\newcommand{\CASE}[1]{\STATE \textbf{case} #1\textbf{:} \begin{ALC@g}}
\newcommand{\ENDCASE}{\end{ALC@g}}
\newcommand{\DEFAULT}{\STATE \textbf{default:} \begin{ALC@g}}
\newcommand{\ENDDEFAULT}{\end{ALC@g}}
\newcommand{\DEFAULTLINEcompareCLFCBF}[1]{\STATE \textbf{default:} }
\title{\LARGE \bf
Safe Learning of Quadrotor Dynamics Using Barrier Certificates *
}
\author{Li Wang, Evangelos A. Theodorou, and Magnus Egerstedt$^\dagger$% <-this % stops a space
\thanks{*The work by the first and third authors was sponsored by Grant No.
1544332 from the U.S. National Science Foundation.}% <-this % stops a space
\thanks{$^\dagger$Li Wang and Magnus Egerstedt are with the School of Electrical and Computer Engineering, Evangelos A. Theodorou is with the School of Aerospace Engineering, Georgia Institute of Technology, Atlanta, GA 30332, USA.  Email: {\tt\small \{liwang, magnus, evangelos.theodorou\}@gatech.edu} }% <-this % stops a space
}
\begin{document}
\maketitle
\thispagestyle{empty}
\pagestyle{empty}

%%%%%%%%%%%%%%%%%%%%%%%%%%%%%%%%%%%%%%%%%%%%%%%%%%%%%%%%%%%%%%%%%%%%%%%%%%%%%%%%
\begin{abstract}
To effectively control complex dynamical systems, accurate nonlinear models are typically needed. However, these models are not always known. In this paper, we present a data-driven approach based on Gaussian processes that learns models of quadrotors operating in partially unknown environments. What makes this challenging is that if the learning process is not carefully controlled, the system will go unstable, i.e., the quadcopter will crash. To this end, barrier certificates are employed for safe learning. The barrier certificates establish a non-conservative forward invariant safe region, in which high probability safety guarantees are provided based on the statistics of the Gaussian Process. A learning controller is designed to efficiently explore those uncertain states and expand the barrier certified safe region based on an adaptive sampling scheme. In addition, a recursive Gaussian Process prediction method is developed to learn the complex quadrotor dynamics in real-time. Simulation results are provided to demonstrate the effectiveness of the proposed approach.
\end{abstract}

\section{Introduction}
Safety is crucial to many physical control dynamical systems, such as autonomous vehicles, industrial robots, chemical reactors, and air-traffic control systems \cite{hovakimyan2011mathcal, ames2017control, berkenkamp2016safe}. If the system reaches certain unsafe states or even fails, both the operator and the controlled plant might be put in serious danger. The existence of model inaccuracies and unknown disturbances create an even greater challenge to the design of safe controllers for these systems.

Tools such as robust control and adaptive control methods have been developed in classic control theory to ensure the safety and stability of the system, see \cite{fukushima2007adaptive,bemporad1999robust} and the references therein. Meanwhile, machine learning based control approaches are becoming increasingly popular as a way to deal with inaccurate models \cite{deisenroth2015gaussian, pan2014probabilistic}, due to their abilities to infer unknown models from data and actively improve the performance of the controller with the learned model. In contrast to classic control methods, learning based control approaches require only limited expert knowledge and fewer assumptions about the system \cite{vinogradska2016stability}. However, there always exists an inherent trade-off between safety and performance in these methods \cite{aswani2013provably}. Data-driven learning approaches rarely provides safety guarantees, which limits their applicability to real-world safety critical control dynamical systems \cite{vinogradska2016stability}. The objective of this paper is to construct high probability safety guarantees for Gaussian Process (GP) based learning approaches using barrier certificates.

In order to promote the application of learning based control methods in safety-critical systems, a number of safe learning approaches have been proposed in the literature. Among these methods, the use of learning Control Lyapunov Functions (CLF) is shown to be a promising approach. A learning from demonstration method was developed in \cite{khansari2014learning} to search for a CLF from several demonstrations, and the learned CLF was used to stabilize the system. But the learned controller did not consider actuator limits and other safe operation constraints. \cite{ravanbakhsh2017learning} introduced a verifier to explicitly validate the learned CLF. However, when the model of the system is inaccurate, the verifier needs to check an infinite number of inequalities throughout the state space, which is computationally difficult \cite{ito2017second}. \cite{berkenkamp2017safe} seeks to learn CLF and maximize the safe operation region for the system with GP model. High probability safety guarantees are provided based on Lyapunov stability and GP statistics. In addition, a reachability-based safe learning approach was presented in \cite{akametalu2014reachability} to reduce the conservativeness of reachability analysis by learning the disturbance from data.

In contrast to the aforementioned methods, this paper interprets the safe operation region as general invariant sets established with barrier certificates, which permits a much richer set of safe control options, rather than Lyapunov sublevel sets. The barrier certificates formally define a forward invariant safe region, where all system trajectories starting in this region remains in this region for all time \cite{prajna2004stochastic, Xu2015Robustness, ames2017control}. With the barrier certificates, the safety of the system can be certified without explicitly computing the forward reachable set \cite{prajna2007framework}. Barrier certificates were successfully applied to many safety critical dynamical systems, such as adaptive cruise control \cite{ames2017control}, bipedal walking \cite{nguyen20163d}, quadrotor control \cite{wu2016safety}, and swarm robotics \cite{wang2016multiobj, wang2017multidrone}. In this paper, we construct a safe operation region with barrier certificates, and gradually expand the certified safe region as the uncertainty of the system reduces. The unknown dynamics of the system is represented with a GP model, which provides both the mean and variance of the prediction. Using the statistics of GP model, a high probability safety guarantee of the system with inaccurate model is provided. 

The search for maximum volume barrier certificates involves the validation of an infinite number of inequality constraints, which is computationally expensive. Inspired by the discrete sampling technique used in \cite{berkenkamp2016safe}, we design an adaptive sampling algorithm to significantly reduce the computation intensity, i.e., the more certain regions in the state space are sampled less without loss of safety guarantees. In addition, a recursive learning strategy based on GP is designed to learn the complex 3D nonlinear quadrotor dynamics online. The learned dynamical model of the quadrotor is then incorporated into a differential flatness based flight controller to improve the trajectory tracking performance. 

The main contributions of this paper are threefold. First, a safe learning strategy is developed based on barrier certificates, which admits a rich set of learning control options. Second, an adaptive sampling algorithm is proposed to significantly reduce the computation intensity of the learning process. Third, an recursive learning strategy based on GP is presented to learn the complex 3D nonlinear quadrotor dynamics online.

The rest of this paper is organized as follows. The preliminaries of barrier certificates and GP are briefly revisited in Section \ref{sec:prelimBarrier}. A safe learning strategy based on barrier certificates is presented in Section \ref{sec:learncbf}. Section \ref{sec:quadlearn} contains a real-time learning algorithm for 3D quadrotor dynamics based on GP. Simulation results are provided in Section \ref{sec:sim}, and the paper is ended by conclusions in Section \ref{sec:conclusions}.

\section{Preliminaries of Barrier Certificates and Gaussian Process} \label{sec:prelimBarrier}
Preliminary results regarding the two fundamental tools, i.e., barrier certificates and Gaussian Process, used to formulate the safe learning strategy are presented in this section.
%To ensure that the state of the system never enters the unsafe region during the learning process, we can regularize the quadrotor controller with safety barrier certificates. The safety certificates are synthesized with control barrier functions, which can ensure the forward invariance of the safe set. 

\subsection{Barrier Certificates and Set Invariance}
Consider a control affine dynamical system
\begin{equation}\label{eqn:sysaffine}
\dot{x} =f(x)+ g(x)u,
\end{equation}
where $x\in\mathcal{X}\subseteq\mathbb{R}^n$ and $u\in \mathcal{U}\subseteq\mathbb{R}^m$ are the state and control of the system,  $f:\mathbb{R}^n\rightarrow\mathbb{R}^n$ and $g:\mathbb{R}^n\rightarrow\mathbb{R}^m$ are Lipschitz continuous. Let the safe set of the system be encoded as the superlevel set of a smooth function $h:\mathbb{R}^n\rightarrow\mathbb{R}$,
\begin{equation}
\mathcal{C} = \{x\in\mathbb{R}^n\ |\ h(x)\geq 0\}.
\label{eqn:admSet}
\end{equation}
The function $h(x)$ is termed a Control Barrier Function (CBF), if there exists an extended class-$\kappa$ function ($\kappa(0)=0$ and strictly increasing) such that
$$
\sup_{u\in \mathcal U} \left\{\frac{\partial h}{\partial x}f(x)+\frac{\partial h}{\partial x}g(x)u+\kappa(h(x))\right\}\geq 0,$$
for all $x\in\mathcal{E}$ with $\mathcal{C}\subseteq\mathcal{E}$. 
 
Given a CBF, the barrier certified safe control space $S(x)$ is defined as
\begin{align}
\label{eqn:reqB2}
S (x) =
     \left \lbrace u\in U\ \, | \ \frac{\partial h}{\partial x}f(x)+\frac{\partial h}{\partial x}g(x)u+\kappa(h(x))\geq 0\right\rbrace ,~x\in \mathcal{E}. \nonumber
\end{align}
With barrier certificates, the invariance property of $\mathcal{C}$ is established with the following theorem,

\noindent\textbf{Theorem \cite{Xu2015Robustness}:} \textit{Given a set $\mathcal{C} \subset \mathbb{R}^n$ defined by (\ref{eqn:admSet}) and a CBF $h$ defined on $\mathcal{E}$, with $\mathcal{C}\subseteq\mathcal{E} \subset\mathbb{R}^n$, any Lipschitz continuous controller $u\colon\mathcal{E}\to\mathbb{R}$ such that $u\in S(x)$ for the system (\ref{eqn:sysaffine}) renders the set $\mathcal{C}$ forward invariant. }

This type of barrier certificates expands the certified safe control space significantly by allowing $h(x)$ to decrease within $\mathcal{C}$ as opposed to strictly increasing \cite{Xu2015Robustness, ames2017control}. Compared with Lyapunov sublevel set based safe region, barrier certificates provide a more permissive notion of safety. As a result, barrier certificates based safe learning controllers have more freedom to efficiently explore those unknown states. This fact can be illustrated with the following example.  

\textit{Example $1$}: Consider an autonomous dynamical system
 \begin{equation}\label{eqn::sysauto}
\begin{bmatrix} \dot{x}_1 \\ \dot{x}_2 \end{bmatrix} 
= \begin{bmatrix} x_2+0.8x_2^2 \\ -x_1-x_2+x_1^2x_2  \end{bmatrix},
\end{equation}
the safe region of this system is estimated with both the Lyapunov sublevel set and barrier certificates. 

Since (\ref{eqn::sysauto}) is a polynominal system, the safe sets can be computed directly with Sum-of-Squares programs using YALMIP \cite{lofberg2005yalmip} and SMRSOFT \cite{chesi2011domain} solvers. Both the Lyapunov function and barrier certificates are limited to second order polynomials. The safe region estimated with the optimal polynomial Lyapunov function is $$\mathcal{A}_1=\{x~|~V^*(x)\leq 1\},$$ where $V^*(x)=1.343x_1^2+0.5155x_1x_2+1.152x_2^2$.

The safe region estimated with barrier certificates is $$\mathcal{A}_2=\{x~|~h^*(x)\geq 0\},$$
where $h^*(x) = 1 -0.4254x_1 -0.3248x_2 -0.7549x_2^2-0.8616x_1^2-0.2846x_1x_2$. 
\begin{figure}[h]
  \centering
  \includegraphics[width=0.9\linewidth]{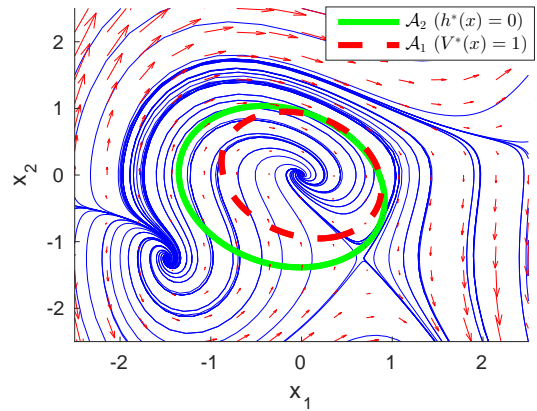}
  \captionof{figure}{Estimates of safe regions for system (\ref{eqn::sysauto}). The regions enclosed by the dashed red ellipse and solid green ellipse are estimated safe regions with optimal polynomial Lyapunov function $V^*(x)$ and barrier certificates $h^*(x)$, respectively.}
  \label{fig:LF_BF}
\end{figure}

From Fig. \ref{fig:LF_BF}, it can be observed that the barrier certified safe region $\mathcal{A}_2$ is much larger than the Lyapunov based safe region $\mathcal{A}_1$. Consequently, safe learning controller based on barrier certificates are allowed to explore more states of the system. In this paper, we will leverage the non-conservative safety guarantee of barrier certificates to allow a much richer set of safe learning control options.

%This is because the barrier certificates are allowed to 1) take arbitrary shapes and 2) not center at the equilibrium.

%TODO: add lemma, optimal barrier certified safe region is larger than optimal Lypaunov function based safe region.

\subsection{Gaussian Processes}
A GP is a nonparametric regression method that can capture complex unknown functions \cite{rasmussen2006gaussian}. With a GP, every point in the state space is associated with a normally distributed random variable, which allows us to derive high probability statements about the system.

Adding some unknown dynamics $d(x)$ to the original class of control-affine systems (\ref{eqn:sysaffine}),  we now consider a system with partially unknown dynamics in this paper, i.e.,
\begin{equation}
\dot{x} = f(x) + g(x)u +{d(x)},
\end{equation}
where $x \in \mathcal{X} \subseteq\mathbb{R}^n$ and $u\in \mathcal{U} \subseteq\mathbb{R}^m$ are the state and control of the system. Although the proposed method applies to general dynamical systems, here we restrict our attention to the class of systems that can be addressed with existing computation tools. It is also assumed that $d(x)$ is Lipschitz continuous. This assumption is necessary, because we want to generalize the learned dynamics to states that are not explored before.

Since the unmodeled dynamics $d(x)$ is $n$ dimensional, each dimension is approximated with a GP model $\mathcal{GP}(0,k(x,x'))$ with a prior mean of zero and a covariance function of $k(x,x')$, where $k(x,x')$ is the kernel function to measure the similarity between any two states $x,x'\in\mathcal{X}$. In order to make GP inferences on the unknown dynamics, we need to get measurements of $d(x)$. This measurement $\hat{d}(x)$ is obtained indirectly by subtracting the inaccurate model prediction $[f(x)+g(x)u]$ from the noisy measurement of the system dynamics $[\dot{x}+\mathcal{N}(0, \sigma_n^2)]$. Since any finite number of data points form a multivariate normal distribution, we can obtain the posterior distribution of $d(x_*)$ at any query state $x_*\in\mathcal{X}$ by conditioning on the past measurements \cite{rasmussen2006gaussian}. 

Given a collection of $w$ measurements $y_w = [\hat{d}(x_1), \hat{d}(x_2), ..., \hat{d}(x_w)]^T$, the mean $m(x_*)$ and variance $\sigma^2(x_*)$ of $d(x_*)$ at the query state $x_*$ are
\begin{eqnarray}\label{eqn:GPinfer}
m(x_*) &=& k_*^T(K+\sigma_n^2I)^{-1}y_w, \\
\sigma^2(x_*) &=& k(x_*,x_*) - k_*^T(K+\sigma_n^2I)^{-1}k_*, 
\end{eqnarray}
where $\lfloor K\rfloor_{(i,j)}=k(x_i,x_j)$ is the kernel matrix, and $k_* = [k(x_1, x_*), k(x_2, x_*), ..., k(x_w, x_*)]^T$.

With the learned system dynamics based on GP, a high probability confidence interval of the unmodeled dynamics $d(x)$ can be established as
\begin{equation} \label{eqn:setD}
\mathcal{D}(x) = \{{d}~|~m(x)-k_\delta\sigma(x)\leq{d}\leq m(x)+ k_\delta\sigma(x)\},
\end{equation}
where $k_\delta$ is a design parameter to get $(1-\delta)$ confidence, $\delta\in (0,1)$. For instance, $95.5\%$ and $99.7\%$ confidence are achieved at $k_\delta=2$ and $k_\delta=3$, respectively. 

%Note that due to the robustness of the barrier certificates, the system can tolerate and recover from multiple wrong predictions. A sequence of $\hat{N}$ wrong predictions is needed to make the system go unsafe. Thus, the actual failure chance is very low with a probability of $\delta^{\hat{N}}$.

\section{Safe Learning with Barrier Certificates}\label{sec:learncbf}
In order to ensure that the learning based controller never enters the unsafe region, we will learn barrier certificates for the system and use the learned certificates to regulate the controller. As discussed in Section \ref{sec:prelimBarrier}, the barrier certificates certify a safe region that is forward invariant. We can first start with an conservative barrier certificate with certified safe region $\mathcal{C}_0(x)$, then gradually expand this certified safe region with the collected data until it stops growing. This incremental learning process is visualized in Fig. \ref{fig:cbf}.
\begin{figure}[h]
\centering
  \includegraphics[width=.85\linewidth]{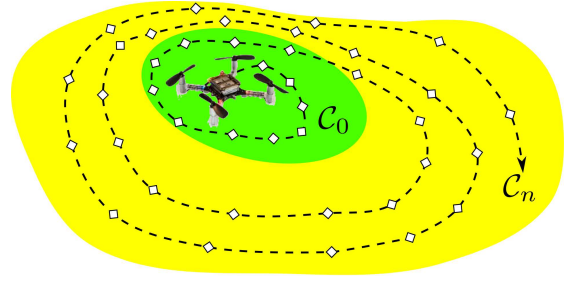}
  \captionof{figure}{Incremental learning of the barrier certificates. The green region $\mathcal{C}_0$ and the yellow regions $\mathcal{C}_n$ are the initial and final barrier certified safe regions, respectively. The barrier certified safe region gradually grows as more and more data points are sampled in the state space.}
  \label{fig:cbf}
\end{figure}

More concretely, the goal of the learning process is to maximize the volume of the barrier certified safe region $\mathcal{C}$ by adjusting $h(x)$, i.e.,
\begin{equation*}
\label{eqn:maxBarrierC} 
 \begin{aligned}
&  & &\underset{h(x)}{\text{max}}\quad
  \text{vol}(\mathcal{C})   \\ 
 &  \text{s.t.}
 & & \underset{u\in\mathcal{U}}{\text{max}} \underset{{d}\in\mathcal{D}(x)}{\text{min}}  \left\{\frac{\partial h}{\partial x}(f(x)+g(x){u}+{d})+\gamma h(x)\right\}\geq 0, \\
&  & & \qquad \qquad\qquad\qquad\qquad\qquad\qquad\qquad\qquad   \forall x \in\mathcal{C}. 
 \end{aligned}
\end{equation*}
Since $u$ and $d$ are independent from each other, we can rewrite this optimization problem into 
\begin{equation}
\label{eqn:maxBarrierCisolate}
 \begin{aligned}
&  & &\underset{h(x)}{\text{max}}\quad
  \text{vol}(\mathcal{C})   \\ 
 &  \text{s.t.}
 & & \underset{u\in\mathcal{U}}{\text{max}}  \left\{\frac{\partial h_k}{\partial x}g(x){u}\right\} + \underset{{d}\in\mathcal{D}(x)}{\text{min}}  \left\{\frac{\partial h}{\partial x}{d}\right\}  \\   &  & & \qquad\qquad\qquad + \frac{\partial h}{\partial x}f(x)+\gamma h(x) \geq 0, 
  \forall x \in\mathcal{C}
 \end{aligned}
\end{equation}
Using the high confidence interval $\mathcal{D}(x)$ in (\ref{eqn:setD}), the barrier certificates constraint can be considered as
\begin{equation}
\label{eqn:maxBarrierCsetD}
 \begin{aligned}
&  & &\underset{h(x)}{\text{max}}\quad
  \text{vol}(\mathcal{C})   \\ 
 &  \text{s.t.}
 & & \underset{u\in\mathcal{U}}{\text{max}}  \left\{\frac{\partial h}{\partial x}g(x){u}\right\}  
+\frac{\partial h}{\partial x}m(x) -k_\delta\left|\frac{\partial h}{\partial x}\right|\sigma(x)
  \\   &  & & \qquad\qquad\qquad + \frac{\partial h}{\partial x}f(x)+\gamma h(x) \geq 0, 
  \forall x \in\mathcal{C}.
 \end{aligned}
\end{equation}
When more data points are collected about the system dynamics, the uncertainty $\sigma(x)$ will gradually decrease. As a result, more states will satisfy the barrier certificates constraint. The goal of the exploration task is to actively collect data to reduce $\sigma(x)$ and maximize the volume of $\mathcal{C}$.

It should be pointed out that the barrier certified region maximization problem (\ref{eqn:maxBarrierCsetD}) is a non-convex, infinite dimensional optimization problem, which is intractable to solve in practice. We will make two simplifications to make it solvable, namely by employing adaptive sampling of the state space and parameterization of the shape of $\mathcal{C}$.

\subsection{Adaptive Sampling of the State Space} \label{sec:adaptsample}
Due to the Lipschitz continuity of the system dynamics, the safety of the system in $\mathcal{X}$ can be evaluated by only sampling a finite number of points in $\mathcal{X}$. Inspired by \cite{berkenkamp2016safe}, we will show that we can adaptively sample the state space without losing safety guarantees. Similar to {\it Lemma 4} in \cite{berkenkamp2016safe}, it can be shown that $h(x)$ and $\dot{h}(x)$ are Lipschitz continuous in $x$ with Lipschitz constants $L_{{h}}$ and $L_{\dot{h}}$, respectively.

Let $\mathcal{X}_\tau\subset \mathcal{X}$ be a discretization of the state space $\mathcal{X}$. The closest point in $\mathcal{X}_\tau$ to $x\in\mathcal{X}$ is denoted as $[x]_\tau$, where $\|x-[x]_\tau\|\leq \frac{\tau}{2}$. 
\begin{lemma} \label{lm:discretize}
If the following condition holds for all $x\in\mathcal{X}_\tau$, 
\begin{eqnarray}\label{eqn:xtauCondition}
\underset{u\in\mathcal{U}}{\text{max}}  \left\{\frac{\partial h}{\partial x}g(x){u}\right\} 
+\frac{\partial h}{\partial x}m(x) -k_\delta\left|\frac{\partial h}{\partial x}\right|\sigma(x) \nonumber \\
 + \frac{\partial h}{\partial x}f(x)+\gamma h(x) \geq (L_{\dot{h}} +\gamma L_{{h}})\tau,
\end{eqnarray}
then the safety barrier constraint
\begin{equation}\label{eqn:barrierConstraint}
\underset{u\in\mathcal{U}}{\text{max}} \underset{d\in\mathcal{D}(x)}{\text{min}} \left\{\frac{\partial h}{\partial x}(f(x)+g(x){u}+d)+\gamma h(x)\right\}   \geq 0
\end{equation}
is satisfied for all $x\in\mathcal{X}$ with probability $(1-\delta)$, $\delta \in(0,1)$.
\end{lemma}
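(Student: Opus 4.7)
The plan is to combine two ingredients: a Lipschitz-continuity argument bridging the discretization $\mathcal{X}_\tau$ and the continuum $\mathcal{X}$, and the Gaussian Process high-confidence bound that relates $\mathcal{D}(x)$ to the true disturbance $d(x)$. First I would introduce the shorthand
\[
F(x) := \max_{u\in\mathcal{U}}\!\left\{\frac{\partial h}{\partial x}g(x)u\right\} + \frac{\partial h}{\partial x}f(x) + \frac{\partial h}{\partial x}m(x) - k_\delta\!\left|\frac{\partial h}{\partial x}\right|\!\sigma(x) + \gamma h(x),
\]
and observe that, since the inner minimization over the interval $\mathcal{D}(x)=[m(x)-k_\delta\sigma(x),\,m(x)+k_\delta\sigma(x)]$ is linear in $d$ and therefore attained at an endpoint, the bracketed expression in (\ref{eqn:barrierConstraint}) evaluates to exactly $F(x)$. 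Hence the hypothesis (\ref{eqn:xtauCondition}) reads $F([x]_\tau)\ge (L_{\dot h}+\gamma L_h)\tau$ for every $[x]_\tau\in\mathcal{X}_\tau$.

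The next step is to show that $F$ is Lipschitz on $\mathcal{X}$ with constant at most $L_{\dot h}+\gamma L_h$. The term $\gamma h(x)$ contributes $\gamma L_h$ by the assumed Lipschitz constant of $h$, while the remaining terms coincide with the worst-case value of $\dot h(x)$ over $u\in\mathcal{U}$ and $d\in\mathcal{D}(x)$; since taking a pointwise max or min over compact decision sets preserves Lipschitz continuity, and since $m$, $\sigma$, and $\partial h/\partial x$ are smooth, this worst-case $\dot h$ inherits the Lipschitz constant $L_{\dot h}$ asserted in the preamble (mirroring the regularity statement of \emph{Lemma 4} of \cite{berkenkamp2016safe}). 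Given any $x\in\mathcal{X}$ and its closest grid point $[x]_\tau$ with $\|x-[x]_\tau\|\le\tau/2$, this yields
\[
F(x)\ge F([x]_\tau) - (L_{\dot h}+\gamma L_h)\|x-[x]_\tau\| \ge (L_{\dot h}+\gamma L_h)\tau - (L_{\dot h}+\gamma L_h)\tfrac{\tau}{2} \ge 0.
\]
Thus at every $x\in\mathcal{X}$ there exists a $u^\star(x)\in\mathcal{U}$ making $\frac{\partial h}{\partial x}(f+gu^\star+d)+\gamma h\ge 0$ uniformly over $d\in\mathcal{D}(x)$.

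The probability factor $1-\delta$ enters only at the last step, where I would invoke the GP confidence property underpinning (\ref{eqn:setD}): by the definition of $k_\delta$, the true disturbance satisfies $d(x)\in\mathcal{D}(x)$ simultaneously for all $x\in\mathcal{X}$ with probability at least $1-\delta$. On that event the controller $u^\star$ above realises $\frac{\partial h}{\partial x}(f+gu^\star+d(x))+\gamma h(x)\ge 0$ for the true dynamics, which is precisely the max--min inequality (\ref{eqn:barrierConstraint}) at that $x$.

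The step I expect to carry the technical weight is establishing the Lipschitz continuity of $F$. The posterior mean $m$ and standard deviation $\sigma$ are nonlinear functionals of the collected data, so bounding their Lipschitz constants uniformly in $x$ requires smoothness of the kernel $k(\cdot,\cdot)$, a conditioning estimate on $K+\sigma_n^2 I$, and a uniform bound on $\|\partial h/\partial x\|$. Rather than redoing this calculation from scratch I would import the estimate directly from the analogue of \emph{Lemma 4} of \cite{berkenkamp2016safe}, since that book-keeping is where the real work sits; the remaining steps in the plan above are then short.
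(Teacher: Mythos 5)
Your proof is correct and follows essentially the same route as the paper's: identify the left-hand side of (\ref{eqn:xtauCondition}) with the max--min expression by noting the minimum over the interval $\mathcal{D}(x)$ is attained at the endpoint $m(x)-k_\delta\sigma(x)$, then bridge from $\mathcal{X}_\tau$ to all of $\mathcal{X}$ using the Lipschitz constants $L_{\dot h}$ and $L_{h}$. The differences are cosmetic: you apply the Lipschitz bound to the worst-case quantity $F$ and use $\|x-[x]_\tau\|\le\tau/2$, whereas the paper phrases the same step in terms of $\dot h(x)+\gamma h(x)$ with the looser bound $\tau$ and invokes the probability-$(1-\delta)$ event at the start rather than the end; both versions defer the Lipschitz regularity of the confidence-bound expression to the analogue of Lemma 4 of \cite{berkenkamp2016safe}, exactly as the paper does.
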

\begin{proof}
With the definition of the high confidence interval $\mathcal{D}(x)$, (\ref{eqn:xtauCondition}) can be rewritten as
$$\underset{u\in\mathcal{U}}{\text{max}} \underset{{d}\in\mathcal{D}(x)}{\text{min}}  \left\{\frac{\partial h}{\partial x}(f(x)+g(x){u}+{d})+\gamma h(x)\right\}\geq (L_{\dot{h}} +\gamma L_{{h}})\tau,$$
with a probability of $(1-\delta)$, for all $x\in\mathcal{X}_\tau$. This is equivalent to
$$\dot{h}(x)+\gamma h(x)\geq (L_{\dot{h}} +\gamma L_{{h}})\tau,$$
for all $x\in\mathcal{X}_\tau$.

Because of the Lipschitz continuity of $h(x)$ and $\dot{h}(x)$, we have for any $x\in\mathcal{X}$,
\begin{eqnarray*}
\dot{h}(x)+\gamma h(x) &\geq& (\dot{h}([x]_\tau) - L_{\dot{h}}\tau) + \gamma (h([x]_\tau) - L_{{h}}\tau)  \\
&\geq& 0.
\end{eqnarray*}
This means that the safety barrier constraint is satisfied for any $x\in\mathcal{X}$, if (\ref{eqn:xtauCondition}) holds for all $x\in\mathcal{X}_\tau$. 
\end{proof}

With the discretization of the state space, we only need to sample a finite number of points to validate the barrier certificates. However, the number of required sampling points is still very large. The following adaptive sampling strategy further reduces the number of sampling points required.
\begin{proposition}
If the following condition is satisfied at $x\in\mathcal{X}$,
\begin{eqnarray}\label{eqn:xktauCondition}
\underset{u\in\mathcal{U}}{\text{max}}  \left\{\frac{\partial h}{\partial x}g(x){u}\right\} 
+\frac{\partial h}{\partial x}m(x) -k_\delta\left|\frac{\partial h}{\partial x}\right|\sigma(x) \nonumber \\
 + \frac{\partial h}{\partial x}f(x)+\gamma h(x) \geq (L_{\dot{h}} +\gamma L_{{h}})k_\tau\tau,
\end{eqnarray}
with $k_\tau\geq 0$, then the safety barrier constraint (\ref{eqn:barrierConstraint}) is satisfied for all $y\in\mathcal{X}$ such that $\|x-y\|\leq k_\tau\tau$.
\end{proposition}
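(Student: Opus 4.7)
The plan is to mirror the proof of Lemma~\ref{lm:discretize}, replacing the uniform discretization distance $\tau$ by the point-specific ball radius $k_\tau\tau$. The argument is essentially the same Lipschitz propagation, but applied locally around a single state $x$ rather than globally to every grid point of $\mathcal{X}_\tau$.

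Concretely, I would carry out three steps. First, using the definition of the high-confidence interval $\mathcal{D}(x)$ in (\ref{eqn:setD}), the hypothesis (\ref{eqn:xktauCondition}) can be rewritten, with probability at least $1-\delta$, as
$$\max_{u\in\mathcal{U}}\min_{d\in\mathcal{D}(x)}\left\{\frac{\partial h}{\partial x}\bigl(f(x)+g(x)u+d\bigr)+\gamma h(x)\right\}\geq (L_{\dot{h}}+\gamma L_h)\,k_\tau\tau.$$
Taking $u$ to be the outer maximizer and noting that the true unknown dynamics lies in $\mathcal{D}(x)$ with the same probability, this yields $\dot{h}(x)+\gamma h(x)\geq (L_{\dot{h}}+\gamma L_h)k_\tau\tau$ along the resulting closed-loop trajectory. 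Second, I would invoke the Lipschitz continuity of $h$ and $\dot{h}$ in the state (with the same constants $L_h$ and $L_{\dot{h}}$ used in Lemma~\ref{lm:discretize}) to obtain $h(y)\geq h(x)-L_h k_\tau\tau$ and $\dot{h}(y)\geq \dot{h}(x)-L_{\dot{h}} k_\tau\tau$ for every $y$ with $\|x-y\|\leq k_\tau\tau$. Third, adding the two bounds gives
$$\dot{h}(y)+\gamma h(y)\geq \dot{h}(x)+\gamma h(x)-(L_{\dot{h}}+\gamma L_h)k_\tau\tau\geq 0,$$
which is exactly the safety barrier constraint (\ref{eqn:barrierConstraint}) evaluated at $y$.

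The main obstacle will be bookkeeping rather than substance. Two points deserve explicit treatment: the Lipschitz bound on $\dot{h}$ must be applied for a fixed control input, so the $u$ selected by the outer maximizer at $x$ has to be the control implicitly carried over to $y$; and the inner minimization is taken over $\mathcal{D}(x)$ rather than $\mathcal{D}(y)$, which is admissible because the slack $(L_{\dot{h}}+\gamma L_h)k_\tau\tau$ already absorbs the Lipschitz variation of the unknown dynamics $d$ across a ball of radius $k_\tau\tau$. Once these details are pinned down, the closing inequality chain is identical to the one used at the end of Lemma~\ref{lm:discretize}, so no new ideas are required.
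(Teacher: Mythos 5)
Your proposal is correct and takes essentially the same approach as the paper: the paper's entire proof of this proposition is the single sentence ``The proof is similar to Lemma \ref{lm:discretize}'', and your three-step Lipschitz-propagation argument is a faithful instantiation of that lemma's proof with $\tau$ replaced by $k_\tau\tau$ and the grid point $[x]_\tau$ replaced by an arbitrary $y$ in the ball of radius $k_\tau\tau$. The two subtleties you flag (applying the Lipschitz bound on $\dot{h}$ under a fixed control, and minimizing over $\mathcal{D}(x)$ rather than $\mathcal{D}(y)$) are glossed over in the paper's own treatment of Lemma \ref{lm:discretize} as well, so noting them is, if anything, more careful than the original.
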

\begin{proof}
The proof is similar to {\it lemma} \ref{lm:discretize}.
\end{proof}
Leveraging the Lipschitz continuity of the barrier certificates, we can adaptively sample the state space without losing safety guarantees. Sparse sampling is performed at places with large safety margin, while dense sampling is only required at places with small safety margin.

\subsection{Parameterization of the Barrier Certificates} 
Because maximizing the volume of $\mathcal{C}$ is a non-convex problem in general, we can parameterize the barrier certificate $h_\mu(x)$ with $\mu$ to simplify the optimization problem. For example, $h_\mu(x)$ can be formulated as $1-Z(x)^T\mu Z(x)$, where $Z(x)$ is the vector of monomials, and $\mu$ is a positive semi-definite matrix. Then maximizing $\text{vol}(\mathcal{C})$ is equivalent to minimize the trace of $\mu$. Further simplification can be made to fix the shape of $\mathcal{C}$ (by optimizing only with the known dynamics) and enlarge the level set of barrier certificates.

With the shape parameterization and adaptive sampling technique, the barrier certificate maximization problem (\ref{eqn:maxBarrierCsetD}) can be written as 
\begin{equation}
\label{eqn:maxBarrierParam}
 \begin{aligned}
&  & &\underset{\mu}{\text{max}}\quad
  \text{vol}(\mathcal{C})   \\ 
 &  \text{s.t.}
 & & \underset{u\in\mathcal{U}}{\text{max}}  \left\{\frac{\partial h_\mu}{\partial x}g(x){u}\right\}  
+\frac{\partial h_\mu}{\partial x}m(x) -k_\delta\left|\frac{\partial h_\mu}{\partial x}\right|\sigma(x)
  \\   &  & & \qquad + \frac{\partial h_\mu}{\partial x}f(x)+\gamma h_\mu(x) \geq (L_{\dot{h}} +\gamma L_{{h}})\tau, 
  \forall x \in\mathcal{C}\cap\mathcal{X}_\tau.
 \end{aligned}
\end{equation}

In order to increase the learning efficiency during the exploration phase, the most uncertain state in $\mathcal{C}$ is sampled,
\begin{equation}\label{eqn:xnext}
x_\text{next} = \underset{x\in\mathcal{C}\cap\mathcal{X}_\tau}{\text{argmax}}\quad \sigma(x). 
\end{equation}
It is assumed that a nominal exploration controller $\hat{u}$ can always be designed to drive the system from the current state $x$ to $x_\text{next}$, i.e., $\hat{u} = GoTo(x, x_\text{next})$. Then the safety barrier certificates are enforced through a QP-based controller to ``rectify" the nominal control such that the system is always safe,
\begin{equation}
\label{eqn:QPcontroller}
 \begin{aligned}
{u}^* =  & \:\: \underset{u\in\mathcal{U}}{\text{argmin}}
 & & J(u) = \|{u} - \hat{u} \|^2    \\ 
 & \quad \text{s.t.}
 & & \frac{\partial h}{\partial x}g(x){u}
+\frac{\partial h}{\partial x}m(x) -k_\delta\left|\frac{\partial h}{\partial x}\right|\sigma(x)\\
 &&& \qquad\qquad\qquad + \frac{\partial h}{\partial x}f(x)+\gamma h(x) \geq 0.
 \end{aligned}
\end{equation}
Therefore, the actual exploration controller $u^*$ tries to stay as close as possible to the desired controller $\hat{u}$, while always honoring the safety requirements. The exploration phase ends when the safe region $\mathcal{C}$ does not grow any more. The learned maximum barrier certificates can be further used to regulate other control tasks the system want to achieve.

\subsection{Overview of the Safe Learning Algorithm}
An overview of the barrier certificates based safe learning algorithm is provided in {\bf Algorithm 1}. At the beginning, a conservative barrier certified safe region $\mathcal{C}_0$ is provided. The most uncertain state $x_\text{next}$ is computed based on the current GP model. Then, the QP based controller (\ref{eqn:QPcontroller}) is used to ensure that the system is driven to $x_\text{next}$ without ever leaving $\mathcal{C}_n$. After updating the GP model with the sampled data at $x_\text{next}$, the barrier certificate optimization problem (\ref{eqn:maxBarrierParam}) is solved. The adaptive sampling technique (\ref{eqn:xktauCondition}) is adopted here to reduce the number of states to be sampled. This process is repeated until the safe region $\mathcal{C}_n$ stops growing.

\begin{algorithm}
 \caption{Barrier Certificates based Safe Learning}
 \begin{algorithmic}[1]
 \renewcommand{\algorithmicrequire}{\textbf{Input:}}
 \renewcommand{\algorithmicensure}{\textbf{Output:}}
 \REQUIRE Initial safe set $\mathcal{C}_0\subseteq \mathcal{X}$, GP model $\mathcal{GP}(0, k(x,x'))$, discretization $\mathcal{X}_\tau$, tolerance $\epsilon$
 \ENSURE  Final safe set $\mathcal{C}_n$
 \\ \textit{Initialization} : $n = 0, x=x_0$
  \REPEAT
  \STATE $n = n+1$
  \STATE Find $x_\text{next}$ with (\ref{eqn:xnext})
  \STATE Design nominal controller $\hat{u} = GoTo(x, x_\text{next})$ 
  \STATE Rectify $\hat{u}$ with (\ref{eqn:QPcontroller}) and drive to $x_\text{next}$
  \STATE Sample $x_\text{next}$, update GP
  \STATE Expand vol($\mathcal{C}_n$) with (\ref{eqn:maxBarrierParam}) and adaptive sampling (\ref{eqn:xktauCondition}) 
  \UNTIL{vol($\mathcal{C}_n$)-vol($\mathcal{C}_{n-1}$)$\leq \epsilon$}
 \RETURN $\mathcal{C}_n$
 \end{algorithmic} 
 \end{algorithm}

\section{Online Learning of Quadrotor Dynamics}\label{sec:quadlearn}
The safe learning approach developed in Section \ref{sec:learncbf} relies on a learning controller that drives the system to explore interested states. The challenge of designing this learning controller is that the 3D quadrotor system considered in this paper is highly nonlinear and unstable. In this section, we will present a recursive learning controller based on GP to learn the complex quadrotor dynamics online.
\subsection{Differential Flatness of 3D Quadrotor Dynamics}
The quadrotor is a well-modelled dynamical system with forces and torques generated by four propellers and gravity \cite{zhou2014vector}. The relevant coordinate frames and Euler angles (roll $\phi$, pitch $\theta$, and yaw $\psi$) are illustrated in Fig. \ref{fig:coord}. The world, body, and intermediate frames (after yaw angle rotation) are denoted by the subscripts $w$, $b$, and $c$, respectively. 
\begin{figure}[h]
  \centering
  \includegraphics[width=.6\linewidth]{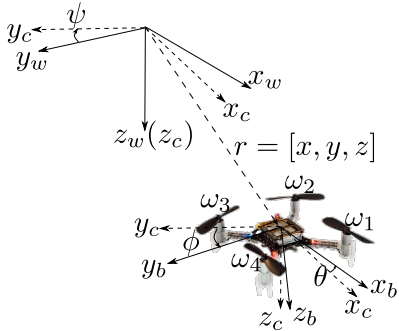}
  \captionof{figure}{Quadrotor coordinate frames.}
  \label{fig:coord}
\end{figure}
%The palm-sized quadrotor illustrated is a Crazyflie 2.0 \cite{crazyflie} used in the experiment.

The Euler angles are defined with the $ZYX$ convention. Hence, the rotation matrix from the body frame to the world frame can be written as
\begin{equation*}
R=\begin{bmatrix}
c\theta c\psi & s\phi s\theta c\psi - c\phi s\psi  & c\phi s\theta c\psi +s\phi s\psi \\
c\theta s\psi & s\phi s\theta s\psi + c\phi c\psi  & c\phi s\theta s\psi -s\phi c\psi \\
-s\theta & s\phi c\theta  & c\phi c\theta 
\end{bmatrix},
\end{equation*}
where $s\theta$ and $c\theta$ stand for $\sin\theta$ and $\cos\theta$, respectively. 

Here, we adopt the quadrotor model used in \cite{hehn2015real} to describe the nonlinear quadrotor dynamics,  
\begin{equation}\label{eqn:quaddyn}
  \begin{cases}
\ddot{r} &= \quad gz_w + \frac{1}{m}Rz_wf_z,  \\
\begin{bmatrix} \dot{\phi}\\ \dot{\theta} \\ \dot{\psi}
\end{bmatrix} &= \quad \begin{bmatrix} 1 & s\phi t\theta & c\phi t\theta \\
0 & c\phi & -s\phi  \\ 0&s\phi sc\theta & c\phi sc\theta
\end{bmatrix} \omega,
  \end{cases}
\end{equation}
where $z_w=[0~0~1]^T$, and $r=[x,y,z]^T$, $m$, and $g$ are the position of the center of mass, the mass, and the gravitational acceleration  of the quadrotor, respectively. $t\theta$ and $sc\theta$ are short for $\tan\theta$ and $\sec\theta$. The control inputs of the quadrotor are the body rotational rates ($\omega=[\omega_x, \omega_y, \omega_z]^T$) and the thrust ($f$). It is assumed that the body rotational rates of quadrotor are directly controllable through the fast response onboard controller, due to the small rotational inertia and high torque features of quadrotors \cite{hehn2015real}. 

Similar to \cite{zhou2014vector}, the dynamics in (\ref{eqn:quaddyn}) is differentially flat with the flat output chosen as $\eta = [r^T, \psi^T]^T$. The full state $q=[r^T, \dot{r}^T, \theta, \phi, \psi]^T$ and control $u=[f, \omega^T]^T$ can be represented as an algebraic function of $[\eta^T, \dot{\eta}^T, \ddot{\eta}^T, \dddot{\eta}^T]$.  With the differential flatness property, quadrotor trajectory planning can be simplified as smooth parametric curves. Given a desired trajectory $\eta_d(t) \in C^3$ that is three times differentiable, the feed forward control $u_{FF} =[f_{FF}, \omega_{FF}^T] $ can be derived by inverting the dynamics in (\ref{eqn:quaddyn}), 
\begin{equation*}
  \begin{cases}
    f_{FF} &= \quad -m \|\ddot{r}_d-gz_w\|,  \\
    \omega_{FF} &= \quad \begin{bmatrix} 1 & 0 & -s\theta_d \\
0 & c\phi_d & s\phi_d c\theta_d  \\ 0&-s\phi_d & c\phi_d c\theta_d
\end{bmatrix}\begin{bmatrix} \dot{\phi}_d \\ \dot{\theta}_d \\ \dot{\psi}_d \end{bmatrix}
  \end{cases}
\end{equation*}
where $\theta_d = \text{atan2}(\beta_a, \beta_b)$, $\phi_d=\text{atan2}(\beta_c, \sqrt{\beta_a^2 +\beta_b^2})$, $\beta_a= -\ddot{x}_d\cos\psi_d -\ddot{y}_d\sin\psi_d$, $\beta_b=-\ddot{z}_d+g$, and $\beta_c=-\ddot{x}_d\sin\psi_d+\ddot{y}_d\cos\psi_d$.

Differential flatness only gives the feed forward control $u_{FF}$. In addition, the unknown model error and tracking error need to be handled by a feedback control $u_{FB}$. The actual control applied to the quadrotor is $u=u_{FF}+u_{FB}$, where
\begin{equation*}\label{eqn:fb}
  \begin{cases}
    f_{FB} &= \quad K_p<Rz_w,r_d-r>+K_d<Rz_w,\dot{r}_d-\dot{r}>,  \\
    \omega_{FB} &= \quad K_p\begin{bmatrix} {\phi}_d-{\phi} \\ {\theta}_d-{\theta} \\ {\psi}_d-{\psi} \end{bmatrix} + K_d\begin{bmatrix} \dot{\phi}_d-\dot{\phi} \\ \dot{\theta}_d-\dot{\theta} \\ \dot{\psi}_d-\dot{\psi} \end{bmatrix} + \bar{K}_p\begin{bmatrix} y_d-y \\ x-x_d\\0\end{bmatrix}
  \end{cases}
\end{equation*}
Note that with an inaccurate model, a high-gain feedback controller is needed to counteract both the model error and disturbances. As a better model is learned over time, only a low-gain feedback controller is needed with an improved tracking performance \cite{nguyen2009local}.

%where the $^\vee$ map brings a skew-symmetric matrix $\Omega = \begin{bmatrix} 0& - \omega_z& \omega_y \\ \omega_z& 0& -\omega_x \\ -\omega_y& \omega_x& 0 \end{bmatrix}$ into the vector form $\omega = \Omega^\vee=[\omega_x, \omega_y, \omega_z]^T$. It is shown in \cite{zhou2014vector} that $R^T\dot{R}$ is skew-symmetric. Thus the $^\vee$ map on $R^T\dot{R}$ is always feasible.

%$\begin{bmatrix} 1 & 0 & -\sin\theta \\
%0 & \cos\phi & \sin\phi\cos\theta  \\ 0&-\sin\phi & \cos\phi\cos\theta
%\end{bmatrix}\begin{bmatrix} \dot{\phi} \\ \dot{\theta} \\ \dot{\psi} \end{bmatrix}$

\subsection{Learning based Control Using Gaussian Process }
The previous section deals with precise quadrotor models. But it is often difficult to acquire accurate parameters for quadrotor systems. In addition, the model (\ref{eqn:quaddyn}) neglects the uncertain effects of damping, drag force, and wind disturbances. Here, we will use GP models to learn the unmodeled dynamics. The unmodeled dynamics can be captured with six GPs along each dimension in the state space, i.e.,
\begin{eqnarray*}
  \begin{cases}
\ddot{r} \quad\;\;\; = \, gz_w + \frac{1}{m}Rz_wf_z + \begin{bmatrix} \mathcal{GP}_1(0,k(q,q')) \\ \mathcal{GP}_2(0,k(q,q')) \\ \mathcal{GP}_3(0,k(q,q')) \end{bmatrix}, \nonumber \\
\begin{bmatrix} \dot{\phi}\\ \dot{\theta} \\ \dot{\psi}
\end{bmatrix} =  \begin{bmatrix} 1 & s\phi t\theta & c\phi t\theta \\
0 & c\phi & -s\phi  \\ 0&s\phi sc\theta & c\phi sc\theta
\end{bmatrix} \omega  + \begin{bmatrix} \mathcal{GP}_4(0,k(q,q')) \\ \mathcal{GP}_5(0,k(q,q')) \\ \mathcal{GP}_6(0,k(q,q')) \end{bmatrix}, \hspace{0.2in} \nonumber
  \end{cases}
\end{eqnarray*}
where the input to the GPs is $q=[r^T, \dot{r}^T, \theta, \phi, \psi]^T$, and the observations for the GPs are $s = [\ddot{r}^T, \dot{\phi}, \dot{\theta}, \dot{\psi}]^T$, respectively. At a new query point $q_*$, the mean $m_i(q_*)$ and variance $\sigma_i^2(q_*)$ of the unknown dynamics can be inferred with (\ref{eqn:GPinfer}).
Based on the learned dynamics, a differential flatness based feed forward controller  can be derived as,
\begin{equation*}
  \begin{cases}
    f_{FF} &= \quad -m \|\ddot{r}_d-[m_1(q), m_2(q), m_3(q)]^T-gz_w\|,  \\
    \omega_{FF} &= \quad \begin{bmatrix} 1 & 0 & -s\theta_d \\
0 & c\phi_d & s\phi_d c\theta_d  \\ 0&-s\phi_d & c\phi_d c\theta_d
\end{bmatrix}\begin{bmatrix} \dot{\phi}_d-m_4(q) \\ \dot{\theta}_d-m_5(q) \\ \dot{\psi}_d -m_6(q) \end{bmatrix},
  \end{cases}
\end{equation*}
where $\theta_d = \text{atan2}(\bar{\beta}_a, \bar{\beta}_b)$, $\phi_d=\text{atan2}(\bar{\beta}_c, \sqrt{\bar{\beta}_a^2 +\bar{\beta}_b^2})$, $\bar{\beta}_a= -(\ddot{x}_d-m_1(q))\cos\psi_d -(\ddot{y}_d-m_2(q))\sin\psi_d$, $\bar{\beta}_b=-(\ddot{z}_d-m_3(q))+g$, and $\bar{\beta}_c=-(\ddot{x}_d-m_1(q))\sin\psi_d+(\ddot{y}_d-m_2(q))\cos\psi_d$.

\subsection{Recursive Online GP Learning} 
One issue with the GP regression is that the time complexity of GP inference is $O(N^3)$, where $N$ is the number of data points. The majority of the time is used to compute the inverse of the kernel matrix $K$. While various approximation methods can be used to reduce the GP inference time, it is still challenging to perform online GP inference for complex dynamically systems like quadrotor. Here, we propose a recursive online GP Learning method to compute the exact GP inference.

As the quadrotor moves forward, we will actively add multiple relevant data points into the kernel matrix at each time step. At the same time, the data points that contribute the least to the inference are deleted. The recursive data addition and deletion operations are described as following.
\subsubsection{Adding Multiple New Data to the Kernel Matrix}
Let the kernel matrix at the $i$th time step be $K_i$, we can save the matrix inverse result from the previous step as $L_i = (K_i+\sigma_n^2I)^{-1}$. Denote the number of new data to be added as $M$.

With the new data $y_{i+1}$ and kernal vector $k_{i+1}$, we have
\begin{eqnarray}
L_{i+1} &=& \begin{bmatrix} L_i^{-1} & k_{i+1} \\ k_{i+1}^T & c_{i+1}+\sigma_n^2I \end{bmatrix}^{-1}  \nonumber\\
        &=& \left[\begin{matrix} L_i+L_ik_{i+1}(c_{i+1}+\sigma_n^2I-k^T_{i+1}L_ik_{i+1})^{-1}k^T_{i+1}L_i  \\ -(c_{i+1}+\sigma_n^2I-k^T_{i+1}L_ik_{i+1})^{-1}k^T_{i+1}L_i \end{matrix} \right. \nonumber\\
        && \qquad\qquad \left. \begin{matrix}  L_ik_{i+1}(c_{i+1}+\sigma_n^2I-k^T_{i+1}L_ik_{i+1})^{-1} \\  (c_{i+1}+\sigma_n^2I-k^T_{i+1}L_ik_{i+1}) \end{matrix} \right]. \nonumber
\end{eqnarray}
Notice that inversion operation only needs to be performed on a $M\times M$ matrix rather than a large $N\times N$ matrix.
\subsubsection{Deleting Multiple Old Data from the Kernel Matrix}
After deleting $M$ data points from the old Kernel matrix inversion $L_i = (K_i+\sigma_n^2I)^{-1}$, the new inverse of the kernel matrix becomes $\bar{L}_i = (\bar{K}_i+\sigma_n^2I)^{-1}$.

First, the data to be deleted is permuted to the bottom of the kernel matrix with a permutation matrix $P_{\pi}$, where $\pi:\mathbb{N}\to \mathbb{N}$ is a permutation of $N$ elements. The permuted kernel matrix is 
$K_i^p = P_{\pi}K_iP_{\pi}^T$, which can be written into a block matrix form, 
\begin{equation*}\label{eqn:kpblock}
K_i^P = \begin{bmatrix} \bar{K}_{i} & E_{i} \\ E_{i}^T & F_{i} \end{bmatrix},
\end{equation*}
where $E_i, F_i$ are the known parts to be deleted. Similarly, 
\begin{eqnarray}
L_i^P &=& P_{\pi}L_iP_{\pi}^T \nonumber \\
            &=& \begin{bmatrix} \bar{L}_i^{-1} & E_{i} \\ E_{i}^T & F_{i}+\sigma_n^2I \end{bmatrix}^{-1}. \nonumber 
\end{eqnarray}
Since $L_i^P$ is known, it can be written into block matrix form with the same block dimensions with (\ref{eqn:kpblock}), 
\begin{equation*}
L_i^P = \begin{bmatrix} A_{i} & B_{i} \\ B_{i}^T & C_{i} \end{bmatrix}.
\end{equation*}
With the block matrix inversion rule, $\bar{L}_i$ can be recovered as
\begin{equation*}\label{eqn:blockrecursion}
\bar{L}_i = A_i-B_iC_i^{-1}B_i^T,
\end{equation*}
which means to perform the deletion operation, the only matrix inverse required is $C_i^{-1}\in\mathbb{R}^{M\times M}$. 

%Iterative matrix inversion is known to accumulate numerical error and cause the system to be unstable. A practical solution is to periodically reset the matrix inversion to get rid of the accumulated numerical error \cite{ding2000stable}.
With the recursive data addition and deletion method, the GP inference can be obtained efficiently online.

\section{Simulation Results}\label{sec:sim}
The GP based learning algorithm is validated on a simulated quadrotor model in two examples, i.e., online learning of quadrotor dynamics and learning safety barrier certificates. In the simulation, the actual weight of the quadrotor is 1.4 times the weight used in the computation. In addition, an unknown constant wind of $0.1g$ is applied in the environment as illustrated in Fig. \ref{fig:simquad}. Since the standard fixed pitch quadrotor cannot generate reverse thrust, the thrust control is limited to $f_z\in [-1.8mg,0]$. This simulation setup is very challenging, because the learning based quadrotor controller needs to deal with very inaccurate model and limited thrust. 
\begin{figure}[h]
  \centering
  \includegraphics[width=0.8\linewidth]{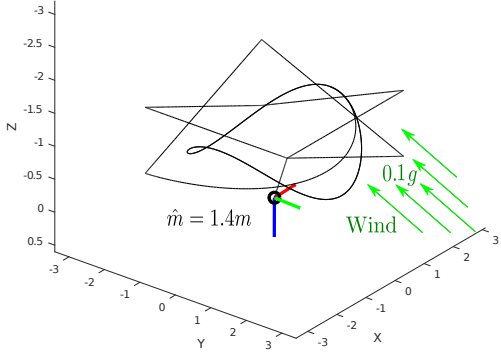}
  \captionof{figure}{A simulated quadrotor flies in an unknown wind field with an inaccurate model.}
\label{fig:simquad}
\end{figure}

\subsection{Online Learning of Quadrotor Dynamics}
 In the first example, the quadrotor is commanded to track a nominal
trajectory (illustrated in Fig. \ref{fig:simquad}) using a differential flatness based controller with the given inaccurate model. A PD controller is wrapped around to stabilize the quadrotor. During the simulation, the quadrotor is intentionally pushed to unknown regions that has not been explored before. This will help us evaluate the scalability of the algorithm. 

The desired trajectory of the quadrotor is given as $\hat{\eta}=[\hat{r}(t)^T, \hat{\psi}(t)]\in C^3$, while the actual trajectory is ${\eta}=[{r}(t)^T, {\psi}(t)]$. In practice, the actual trajectory might deviate significantly from the desired trajectory when the model is very inaccurate. To track the desired trajectory, the nominal trajectory is designed with a pole placement controller,
\begin{equation*}\label{eqn:poleCLF}
\dddot{r}_i = \dddot{\hat{r}}_i - K\cdot [(r_i-\hat{r}_i), ~(\dot{r}_i-\dot{\hat{r}}_i), ~(\ddot{r}_i-\ddot{\hat{r}}_i)]^T.
\end{equation*}
In the simulation, the sample size of the recursive GP model is fixed at 300 data points. At each time step, the most irrelevant data point is thrown away, and the most relevant data point is added to the GP model. The data relevance is decided by the kernel function $k(q,q^*)$, where $q=[r^T, \dot{r}^T, \theta, \phi, \psi]^T$. It can observed that the tracking error of the learning based controller is significantly smaller than the tracking error without GP inference, as shown in Fig. \ref{fig:gpcompare}. 
\begin{figure}[h]
\centering
\begin{subfigure}{.24\textwidth}
  \centering
  \includegraphics[width=1\linewidth]{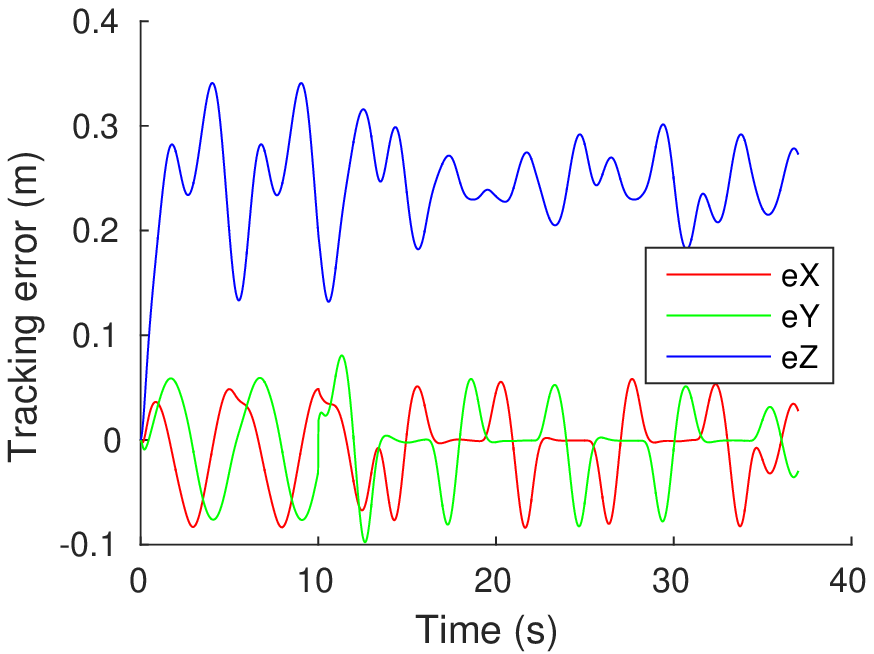}
  \captionof{figure}{Tracking error without GP}
  \label{fig:gp1time}
\end{subfigure}
\centering
\begin{subfigure}{.23\textwidth}
  \centering
  \includegraphics[width=1.0\linewidth]{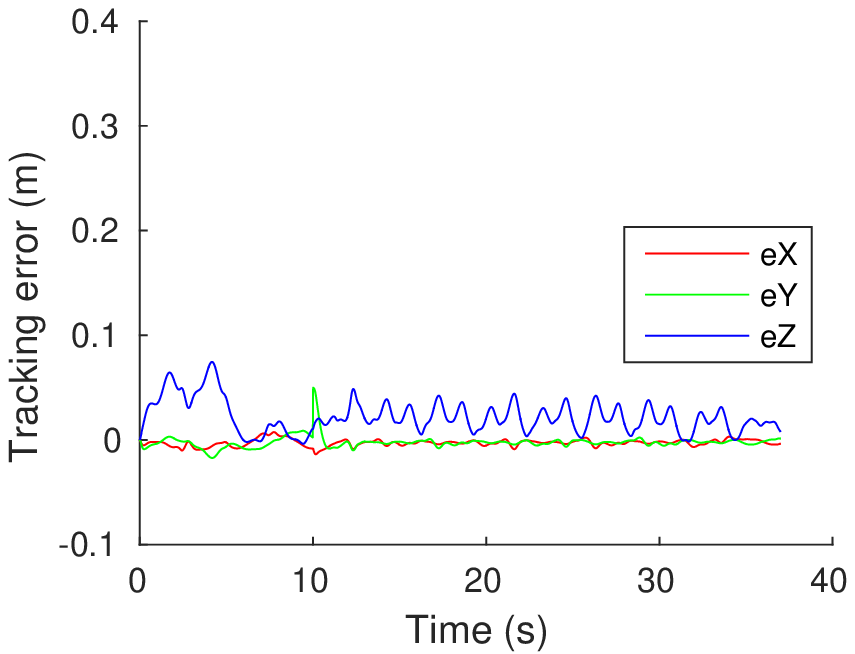}
  \captionof{figure}{Tracking error with GP}
  \label{fig:gp0error}
\end{subfigure}%
\caption{Tracking error of the differential flatness based flight controller with and without GP inference.}
\label{fig:gpcompare}
\end{figure} 

With the recursive learning strategy, it is demonstrated in Fig. \ref{fig:gp1error} that the GP inference time is always kept below $20ms$. Thus, the recursive GP inference method is very suitable for online learning of quadrotor dynamics. 
\begin{figure}[h]
  \centering
  \includegraphics[width=0.7\linewidth]{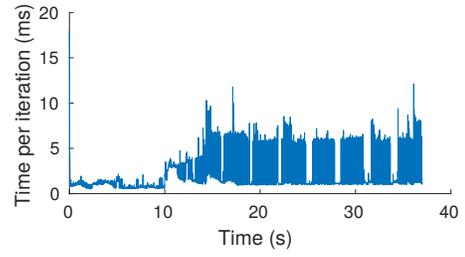}
  \captionof{figure}{Recursive GP inference time per iteration.}
  \label{fig:gp1error}
\end{figure}

By pushing the quadrotor to unexplored regions, we can found that learning with $q'=[\dot{r}^T, \theta, \phi, \psi]^T$ yields much better scalability than learning with $q=[r^T, \dot{r}^T, \theta, \phi, \psi]^T$. The reason might be the position $r$ is not as important as other features in the current simulation setup.

\subsection{Learning Safety Barrier Certificates} \label{eqn:learnCBF}
In this example, the motion of the quadrotor is constrained within an ellipsoid safe region, i.e., $$ \frac{x^2}{0.16} + \frac{y^2}{0.16} + \frac{(z+0.8)^2}{0.36} \leq 1.$$  The quadrotor is controlled to fly back and forth on a vertical path inside the ellipsoid. The goal is to learn how aggressively the quadrotor can fly in the $z$ direction with an inaccurate model and limited thrust. 

The barrier certificates are parameterized as
\begin{eqnarray*}
h_{\mu}(r) &=& 1-\frac{(z+0.8)^2}{0.36} -\mu\dot{z}^2 \\ &&- \frac{x^2}{0.16}- \frac{y^2}{0.16} - \frac{\dot{x}^2}{0.25}- \frac{\dot{y}^2}{0.25}\geq 0,
\end{eqnarray*}
where $\mu$ is the barrier parameter to regulate how fast the quadrotor can fly in the $z$ direction. Small values of $\mu$ correspond to large admissible speed $\dot{z}$, which means more aggressive flight behavior. Thus, the objective of the learning process is to minimize $\mu$ with the collected data.

To reduce the number of required sample points, the adaptive sampling strategy developed in Section \ref{sec:adaptsample} was adopted. An illustrative example of the adaptive sampling strategy is given in Fig. \ref{fig:simadapt}. It can be observed that places closer to the boundary of the safe region ($z=-1.2$ and $z=-0.2$) are sampled much denser than the place closer to the center of the safe region ($z=0$). Furthermore, downward speed ($\dot{z}>0$) is sampled much denser than the upward speed ($\dot{z}<0$). This might be caused by the lack of reverse thrust to counter the unmodeled dynamics.
\begin{figure}[h]
  \centering
  \includegraphics[width=0.95\linewidth]{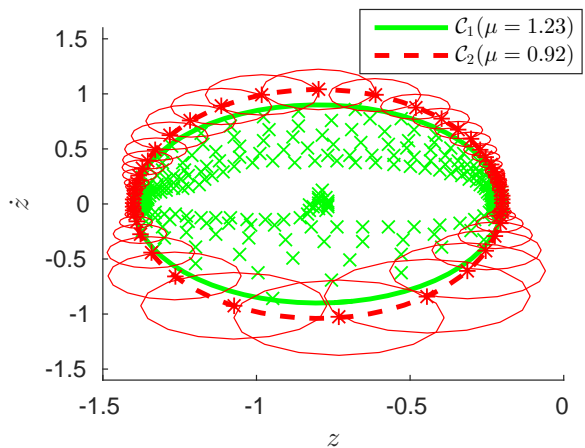}
  \captionof{figure}{Adaptive sampling of the state space. The region enclosed by the solid green ellipse $\mathcal{C}_1$ is the current safe region, while the region enclosed by the dashed red ellipse $\mathcal{C}_2$ is the optimized next safe region. The green cross markers and red asterisk markers are the data points already sampled and to be sampled, respectively. The red circles centered at those sample points are the confident safe regions based on (\ref{eqn:xktauCondition}). All the unexplored region between $\mathcal{C}_1$ and $\mathcal{C}_2$ are covered by the circular confident safe region.}
  \label{fig:simadapt}
\end{figure}

A conservative barrier certificate ($\mu=6.3$) is provided at the beginning of the learning process. Then, the quadrotor gradually explores the safe region $\mathcal{C}_0$ and expands it to $\mathcal{C}_n$ ($\mu=0.6$), as illustrated in Fig. \ref{fig:simbarrierIF}. The nominal exploration controller is always regulated by the barrier certificates using the QP-based controller in (\ref{eqn:QPcontroller}). During the learning process, the quadrotor never leaves the barrier certified safe region.
\begin{figure}[h]
  \centering
  \includegraphics[width=0.95\linewidth]{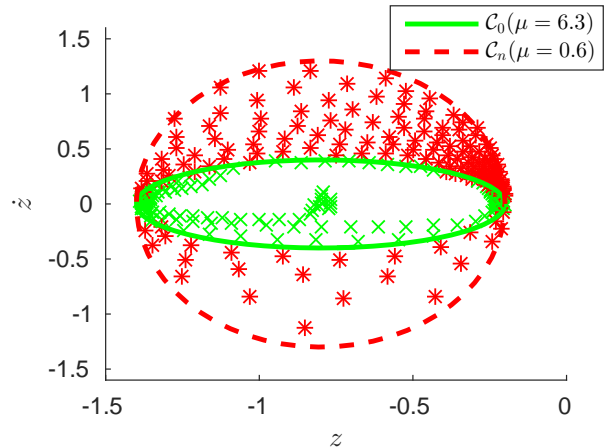}
  \captionof{figure}{Initial and final barrier certificates. The regions enclosed by the solid green ellipse ($\mathcal{C}_0$) and dashed red  ellipse ($\mathcal{C}_n$) are the initial and final barrier certified safe regions, respectively. The green cross markers and red asterisk makers are the sampled data points.}
  \label{fig:simbarrierIF}
\end{figure}

%In this example, a quadrotor is controlled to fly in an ellipse path with two walls blocking the part of path. The goal is to learn how aggressive the quadrotor can approach those two walls.
%\begin{figure}[h]
%  \centering
%  \includegraphics[width=0.8\linewidth]{Fig/SafeLearningExample.eps}
%  \captionof{figure}{A quadrotor following an ellipse path $(\frac{x}{2})^2+y^2=1$. The wall is at $x=-1$ and $x=1$. }
%  \label{fig:exp}
%\end{figure}

\section{Conclusions}\label{sec:conclusions}
A safe learning algorithm based on barrier certificates was developed in this paper. The learning controller is regulated by the barrier certificates, such that the system never enters the unsafe region. The unmodel dynamics of the system was approximated with a Gaussian Process, from which a high probability safety guarantee for the dynamical system was derived.  The barrier certified safe region is gradually expanded as the uncertainty of the system dynamics is reduced with more data. This safe learning technique was applied on a quadrotor system with 3D nonlinear dynamics. The computation time of this learning method is reduced significantly with an adaptive sampling strategy and a recursive GP inference method. Simulation results demonstrated the effectiveness of the proposed method.

%\section*{APPENDIX}

\addtolength{\textheight}{-12cm}   % This command serves to balance the column lengths

%%%%%%%%%%%%%%%%%%%%%%%%%%%%%%%%%%%%%%%%%%%%%%%%%%%%%%%%%%%%%%%%%%%%%%%%%%%%%%%%

\bibliographystyle{abbrv}
\bibliography{mybib}
\end{document}